\documentclass[letterpaper, 10 pt, conference]{ieeeconf}  

\makeatletter

\let\proof\@undefined
\let\endproof\@undefined
\makeatother

\IEEEoverridecommandlockouts                              

\overrideIEEEmargins                                      



\usepackage{graphics} 
\usepackage{epsfig} 
\usepackage{amsmath,amsthm} 
\usepackage{amssymb}  
\usepackage{algorithm,algorithmicx}
\usepackage[noend]{algpseudocode}
\usepackage{mathtools}
\usepackage{xcolor}
\usepackage{setspace}
\usepackage{algpseudocode}
\usepackage{listings}
\usepackage{amsfonts} 
\usepackage{listings}

\usepackage{xcolor}
\usepackage{subfigure}

\usepackage{graphicx}
\usepackage[caption=false]{subfig}
\usepackage{booktabs}
\usepackage{makecell} 
\usepackage{subfig}
\usepackage{url}
\usepackage{cite}

\newtheorem{theorem}{Theorem}[section]

\newtheorem{problem}{Problem}

\newcommand{\G}{\mathcal{G}}

\newcommand{\obs}{\mathcal{W}_{\text{obs}}}

\newcommand{\GP}{G^{\mathcal{P}}}

\title{\LARGE \bf
Real-Time Navigation for Autonomous\\Surface Vehicles In Ice-Covered Waters
}

\author{Rodrigue de Schaetzen$^{1}$, Alexander Botros$^{1}$, Robert Gash$^{2}$, Kevin Murrant$^{2}$, Stephen L.\ Smith$^{1}$%
\thanks{This work is supported in part by the Natural Sciences and Engineering Research Council of Canada (NSERC) and by the National Research Council Canada (NRC).} %
\thanks{$^{1}$Department of Electrical and Computer Engineering, University of Waterloo, Waterloo, ON N2L 3G1, Canada (e-mail: \protect\url{{rdeschae, alexander.botros, stephen.smith}@uwaterloo.ca})}%
\thanks{$^{2}$National Research Council Canada (e-mail: \protect\url{{robert.gash, kevin.murrant}@nrc-cnrc.gc.ca})}%
}

\begin{document}

\maketitle
\thispagestyle{empty}
\pagestyle{empty}

\begin{abstract}
Vessel transit in ice-covered waters poses unique challenges in safe and efficient motion planning. When the concentration of ice is high, it may not be possible to find collision-free trajectories. Instead, ice can be pushed out of the way if it is small or if contact occurs near the edge of the ice. In this work, we propose a real-time navigation framework that minimizes collisions with ice and distance travelled by the vessel. We exploit a lattice-based planner with a cost that captures the ship interaction with ice. To address the dynamic nature of the environment, we plan motion in a receding horizon manner based on updated vessel and ice state information. Further, we present a novel planning heuristic for evaluating the cost-to-go, which is applicable to navigation in a channel without a fixed goal location. The performance of our planner is evaluated across several levels of ice concentration both in simulated and in real-world experiments. 
\end{abstract}

\section{INTRODUCTION}
Recent successes in deploying autonomous ship navigation systems demonstrate that ship autonomy can improve marine safety and travel efficiency \cite{bergman2020optimization}. These benefits are especially appealing to crews tasked with high-risk missions such as arctic navigation through ice-covered waters~\cite{canada_2019, goerlandt2017analysis}. This setting typically features a significantly higher concentration of obstacles than most autonomous navigation applications~\cite{murrant2021dynamic}. In addition, the obstacles are dynamic in nature, moving in response to actions taken by the ship and contributing to the overall complexity of the problem.

In this paper, we address the problem of planning motion for an autonomous surface vehicle (ASV) operating in icy waters. We assume that the ASV is built for arctic or sub-arctic conditions but has limited ice-breaking capabilities. Depending on the ice conditions, these ASVs may need to be escorted by an icebreaker which creates a narrow channel containing a high concentration of ice of widely different sizes (e.g., navigation through the St. Lawrence Seaway during winter).  Our proposed framework operates by planning a reference path to be tracked by the ASV, and then replanning in a receding horizon fashion using updated ice information.
\begin{figure}[t]
    \centering
    \includegraphics[width=0.44\textwidth]{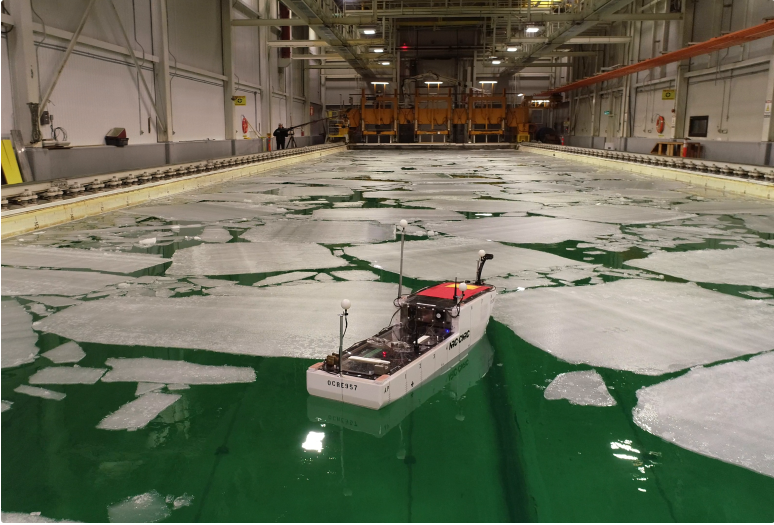}
    \caption{Model vessel being tested in the $12\ \text{m}  \times  76\ \text{m}$ National Research Council Canada (NRC) ice tank in St. John's, Newfoundland and Labrador.}
    \label{fig:icetank}
\end{figure}   

Existing work on path planning for ASVs predominantly focuses on planning collision-free paths~\cite{bergman2020optimization, chiang2018colreg, shan2020receding, choi2015arctic, aksakalli2017optimal}. For the conditions described above --- and illustrated in Figure \ref{fig:icetank} --- such paths may not exist. Thus, rather than avoiding ice, the objective should be to minimize a cost function that captures the effect of a collision with an ice piece (for example, the energy transferred during collision).  In this work we propose a planning framework that utilizes such a cost function.

\emph{Contributions:}  For ships operating in an ice field, we propose a local path planner which we incorporate into a real-time navigation system. The underlying methodology we adopt is the familiar A* path planner over a graph of motion primitives \cite{pivtoraiko2009differentially}. However, to apply this methodology to ship navigation in ice, several key challenges are addressed. We adapt an existing collision model for ship-ice interactions \cite{daley2014gpu} to show that a cost function modelled on \emph{kinetic energy} efficiently penalizes head-on collisions with large ice pieces. Further, since the goal of our planner is forward progress through an ice channel, we propose an admissible closed-form heuristic where the objective is to reach a line segment as opposed to a single configuration. Finally, we demonstrate the efficacy of our approach in ice fields through experiments --- both in simulation and in a physical $12\ \text{m} \times 76\ \text{m}$ ice tank. 

\emph{Related Work:}  Navigation among movable obstacles (NAMO) is considered in \cite{stilman_planning_2008, hai-ning_wu_navigation_2010}. The authors address the problem of navigating a robot through an environment where obstacles can be manipulated by the robot. The NAMO problem is similar to the problem of navigating an ASV through an ice field in that obstacles are movable in both settings. However, unlike ASV navigation the obstacles considered in the NAMO problem typically do not interact with each other once manipulated. Of perhaps greater consequence, obstacles inflict no damage to the robot. 

Extensive work has been done in path planning, tracking control, and collision avoidance algorithms for ASVs. For many of these works, the obstacles considered are either other vessels or static surrounding environments~\cite{zhuang2011motion, kuwata2013safe, chiang2018colreg} like harbors~\cite{bergman2020optimization} and urban waterways~\cite{shan2020receding}. The approaches proposed in these works are effective in their settings, but assume that collisions are forbidden. Therefore, they are not easily generalized to path planning in highly congested environments where collisions are unavoidable.

In path planning for ships, related to our work, the authors in \cite{choi2015arctic} use aggregated statistics of the ice conditions retrieved from satellite imaging for their uncertainty-based route planner. Their work plans paths -- consisting of a set of waypoints -- on the scale of several thousand kilometers which require a local planner (e.g., the one presented in this paper) for real-time autonomous navigation. Local path planning is considered in \cite{hsieh2021sea} where the authors propose a motion planner using bidirectional RRT and data gathered from marine radar imaging. Similarly, in \cite{gash2020machine} the authors construct a graph using a morphological skeleton from a post processed overhead snapshot of an ice field. They then employ A* to compute a path in the resulting graph.  The techniques proposed in \cite{hsieh2021sea, gash2020machine} empirically work well in low-concentration ice fields. However, planned paths are piece-wise straight lines and do not take into account the ASVs' dynamics. Further, the authors assume that computed paths can and must be collision-free. Minimal turning radius constraints are considered in~\cite{aksakalli2017optimal}, but the authors still consider collision-free paths. 

\section{Problem Formulation}
The problem we address is navigating a ship through a cluttered ice channel (as in Figure \ref{fig:problem_form}(a)) while minimizing the effects of ship-ice collisions on the ship---namely the energy transferred in the collisions. We treat the water surface on which a ship moves as a 2D surface $\mathcal{W}\subseteq\mathbb{R}^2$. 

Our objective is forward progress along a channel $\mathcal{C}\subseteq \mathcal{W}$ which we model as a rectangle with length parallel to the $y$-axis as in \ref{fig:problem_form}(a). This model does not limit the applicability of our method since long, curved channels can be partitioned into smaller rectangles as in Figure \ref{fig:problem_form}(b). The objective is to reach a goal line segment $\G \subset \mathcal{C}$ with constant $y$-value as illustrated in Figure \ref{fig:problem_form}(a). Given $m$ ice floes in $\mathcal{C}$, we treat each floe as an obstacle $O_i\subset \mathcal{W}, \ i=1,\dots,m$ which we group into a set of obstacles $\mathcal{W}_{\text{obs}}=\{O_1,\dots,O_m\}$. We assume that the location of each obstacle in $\obs$ can be estimated at any time via an onboard vision system.

\begin{figure}[t]
    \centering
    \includegraphics[width=0.39\textwidth]{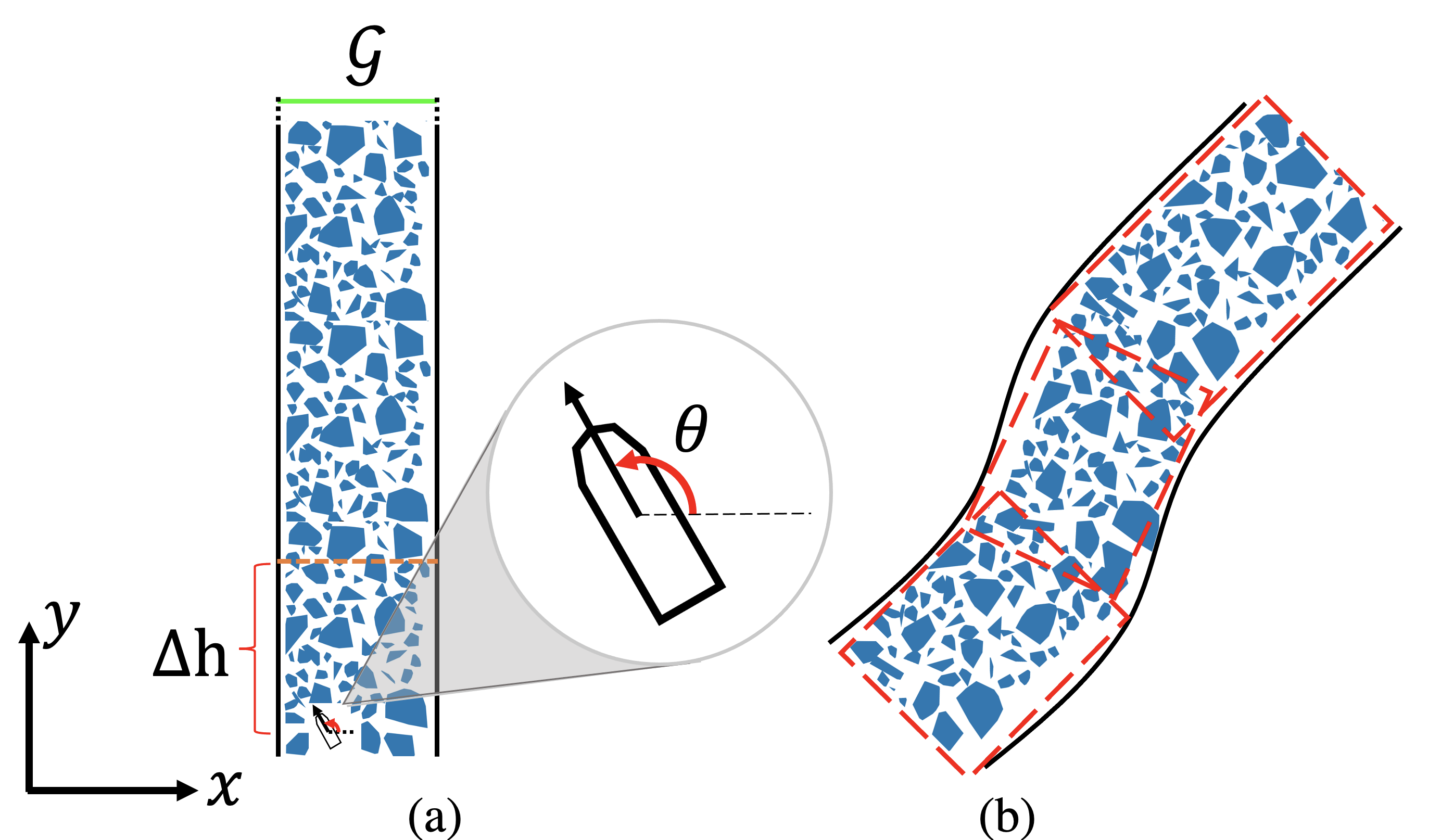}
    \caption{(a) Depiction of the navigation problem of interest for a rectangular channel with ice (blue), ASV (black), and goal region (green line). (b) Generalization of problem to curved channel.}
    \label{fig:problem_form}
\end{figure}
To navigate a ship through an ice field, three components are required: a \emph{reference path} $\pi(s)$ parameterized by arc-length $s$ along the path, a \emph{velocity profile} $v$ that describes how the path is traversed through time, and a \emph{controller} for tracking the path at the desired velocity. The tuple $(\pi, v)$ is called a \emph{trajectory}. To capture the effects of ice collision, we utilize a cost function $u$ that penalizes the path length (final arc length) $s_f$ and a \emph{collision cost}:
\begin{equation}
\label{eq:cost_fn}
u(\pi,v) = s_f + \underbrace{\alpha\int_0^{s_f}c_{\text{obs}} \big(\pi(s), \obs,v(s)\big) \ ds}_{\text{collision cost}}.
\end{equation}
Where $c_{\text{obs}} \geq 0$ is described in detail in Section~\ref{sec:approach} and $\alpha\geq 0$ is a tuning parameter. Thus, we seek to solve the following problem.
\begin{problem}
\label{prob:main_prob}
Given a start position of the ship, a goal line $\mathcal{G}$, and obstacles $\obs$, compute a trajectory $(\pi, v)$ from the start to $\mathcal{G}$ that minimizes $u$.
\end{problem}
In what follows we propose a solution to this problem and describe how it is incorporated into a navigation system for real-time (re-)planning.

\section{Navigation Framework}
\label{sec:approach}
In this section, we discuss our navigation framework and approach to solving Problem \ref{prob:main_prob}. To account for the large and evolving environment, we frequently re-plan a reference path and velocity profile for a controller in real-time. Replanning is done over a moving horizon as illustrated by the part of the channel lying below the orange dotted line in Figure \ref{fig:problem_form}(a). 

\begin{algorithm}[h]
\caption{Receding Horizon Navigation Framework}\label{alg:main}
 \hspace*{\algorithmicindent} \textbf{Input:} $\mathcal{G}, \Delta h, \Delta t$
\label{alg:GeneralPlanner}
\begin{algorithmic}[1]
\While{\texttt{True}}
\State $\obs\gets$ obstacles from onboard sensors 
\State $P, v_S\gets $ current ship pose and speed 
\State Set intermediate goal $\mathcal{G}^i$ a distance $\Delta h$ ahead of $P$
\If{ship pose $P$ has not crossed $\mathcal{G}$}
\State $\pi \gets $ path from $P$ to $\mathcal{G}^i$ with \emph{static} $\obs$, $v_S$
\State $v_{\text{nom}}\gets $ velocity profile for $\pi$
\State Send $(\pi, v_{\text{nom}})$ to controller to track for time $\Delta t$
\Else
\State Exit loop
\EndIf
\EndWhile
\end{algorithmic}
\end{algorithm}

The navigation framework is summarized in Algorithm \ref{alg:GeneralPlanner}, which takes as input a final goal line segment $\mathcal{G}$, receding horizon parameter $\Delta h$, and control tracking parameter $\Delta t$. At the start of each iteration we obtain updated ice data $\obs$ from onboard sensors (Line 2) along with the ship's current position in $\mathbb{R}^2$ and orientation (called a \emph{pose} $P$) and speed  (Line 3). We compute an intermediate goal line $\G^i$ a distance $\Delta h$ ahead of the ship (Line 4; see Figure \ref{fig:problem_form}(a)). If the ship has not yet reached the final goal $\mathcal{G}$ (Line 5), we run our proposed planning algorithm (see Section~\ref{sec:state_lattice}), returning a path from $P$ to $\mathcal{G}^i$ (Line 6). A nominal velocity profile is then computed for the path (Line 7).  In our planner, we use a constant nominal speed based on the concentration of ice, following established guidelines such as \cite{canada_2019}. The planned trajectory $(\pi,v_{\text{nom}})$ is sent to a controller (Line 8) which tracks it for a time $\Delta t$ before the process repeats. We employ a Dynamic Positioning (DP) controller described in Section \ref{control section}. The following sections describe how paths $\pi$ are computed.

Note that at each iteration, ice is treated as static when computing a trajectory $(\pi, v_{\text{nom}})$ but is updated frequently to account for collisions and ice movement (Line 2). This approach maintains low planning run-time for each iteration while still accounting for complex ice movement across all iterations (this is explored in the evaluation in Section \ref{resultssection}).

\subsection{Path Planning using a State Lattice}
\label{sec:state_lattice}
To solve Problem~\ref{prob:main_prob}, we use a form of lattice planner~\cite{pivtoraiko2009differentially}, where paths are generated using a finite set of motion primitives.

\subsubsection{Model}
\label{sec:Model}
For the purposes of path planning, we treat the ship as a 2D rigid body with three degrees of freedom~\cite{fossen2011handbook}:  planar position $(x,y)\in\mathbb{R}^2$ and heading $\theta\in[0, 2\pi)$. We refer the tuple $(x,y,\theta)$ as a \emph{configuration}.  We adopt a unicycle model commonly used in marine navigation \cite{kim2014angular, liang2020path, caillau2019zermelo}: $\dot{x}(s)=\cos(\theta), \dot{y}(s)=\sin(\theta), \dot{\theta}(s) = \kappa, \ |\kappa|\leq \kappa_{\max}$,
where derivatives $\dot{(\cdot)}$ are taken with respect to arc length $s$, $\kappa$ is the \emph{curvature}, and $\kappa_{\text{max}}$ is a maximum curvature dictated by the physical limits of the ship. A path $\pi(s)$ is \emph{feasible} if it adheres to the unicycle model above. 

\subsubsection{Lattice planning}
In lattice-based path planning, the set of all configurations is discretized into a regularly repeating grid called a \emph{lattice} $L$.  Configurations in the lattice are called \emph{vertices}. A set (called a \emph{control set}) of feasible paths between lattice vertices (called \emph{motion primitives}) is pre-computed offline and used as an action set during an online search. The key observation in lattice-based path planning is that motion primitives may be rotated, translated, and concatenated to form complex paths (observe that motion primitives \emph{are} paths between lattice vertices). 

For a control set $\mathcal{P}$, we can construct a graph $\GP=(L, E, \tilde{u})$ the vertices of which are the lattice vertices $L$ while edges are pairs of vertices $(p^1, p^2)$ such that there exists a motion primitive $\pi\in\mathcal{P}$ that can be rotated and translated so that $\pi(0)=p^1, \pi(s_f)=p^2$. Given the ship's current speed $v_S$ (Line 3 of Algorithm \ref{alg:GeneralPlanner}) and $\obs$ (Line 2), the cost of an edge $(p^1, p^2)$ with associated motion primitive $\pi$ is given by $\tilde{u}((p^1, p^2)) = u(\pi, v_S)$ from \eqref{eq:cost_fn}. The problem of computing a feasible path between lattice vertices reduces to computing a cost-minimizing path in $\GP$. 

We generate a state lattice $L$ by discretizing the plane $\mathbb{R}^2$ into a uniform grid and the headings $\theta$ into uniformly spaced angles around the unit circle. To define our motion primitives, we compute shortest paths between $(x,y,\theta)$ configurations for the unicycle model, known as Dubin's paths~\cite{dubins1957curves}. These paths are comprised of sequences of straight lines and circular arcs of radius $r_{\min}=\kappa_{\max}^{-1}$.  The control set $\mathcal{P}$ is generated using the method proposed in \cite{botros2021multi} which computes a control set that balances a tradeoff between $|\mathcal{P}|$ and path optimality.  

We use the lattice framework to plan a path $\pi$ (Line 6) from the ship's current pose $P$ (Line 3) to the current goal $\G^i$ (Line 4) with the lattice origin aligned with $P$. This is accomplished by searching over the graph $\GP$ using the A* search algorithm \cite{hart1968formal} with a line segment as the objective as opposed to a single configuration. As a final step in Line 6, we run a smoothing algorithm on $\pi$ introduced in \cite{botros2021multi} as a post-processing step to remove excessive oscillations. In Section \eqref{eq:cost_fn} we finalize the cost function $u$ from \eqref{eq:cost_fn} while in Section \ref{sec:heuristic} we present a heuristic to improve the performance of A*.

\subsection{Cost Function}
\label{sec:cost_function}
We describe the cost function $u$ from \eqref{eq:cost_fn} given a feasible path $\pi$, obstacles $\obs$, and the ship's current speed $v_S$. In detail, we use the notion of \emph{kinetic energy} to derive a function $c_{\text{obs}}(\pi(s), \obs, v_S)$ that captures the severity of ship-ice collisions in terms of the energy transferred during collision.

We restrict our attention to convex, polygon-shaped ice and assume uniform ice density and thickness in the individual ice floes \cite{daley2014gpu, canada_2020}.  Ship-ice collisions are modeled and empirically validated in \cite{daley2014gpu}, and we adopt a simplified version of their 2D inelastic-collision method to model ship-ice collisions. Specifically, we treat the ship and ice as disks rather than polygons and the ice as static prior to collision. This simplified model lets us efficiently capture sufficient detail in our collision cost and similar approximations have been made in existing work \cite{popov1969strength, daley1999energy}.

With this collision model, the change in kinetic energy $\Delta K_{\text{sys}}$ in the system is given by
\begin{equation}
\label{eq:DeltaKsys}
    \Delta K_{\text{sys}} = \frac{1}{2}M_{\text{eq}}V_{\text{eq}}^2,
\end{equation}
where $M_{\text{eq}}$ and $V_{\text{eq}}$ are the effective inertial mass and velocity, respectively, at the moment of collision \cite{daley2014gpu}. For the case of two disk-shaped bodies, the effective mass $M_{\text{eq}}$ is constant and is defined as:
\begin{equation}
    M_{\text{eq}} = \frac{m_Sm_I}{m_S + m_I},
\end{equation}
where $m_S, m_I$ are the ship and ice masses, respectively. We assume that $m_S$ is known and $m_I$ can be calculated as the product of the area, thickness, and density \cite{daley2014gpu}. The effective velocity $V_{\text{eq}}$ is given by \cite{daley2014gpu}:
\begin{equation}
    V_{\text{eq}} = v_S\cos(\theta),
\end{equation}
where $v_S$ is measured in Line 3 of Algorithm \ref{alg:GeneralPlanner} and $\theta$ is the angle between the ship's heading and the collision normal $\vec{n}$ (Figure \ref{fig:collision}).  Next, we isolate the change in kinetic energy of the ship $\Delta K_S$. 

The goal of our collision cost will be to minimize the ship's kinetic energy loss from ship-ice collisions. Since the ice is initially static, it must hold that $\Delta K_I>0$ where $\Delta K_I$ is the change in kinetic energy of the ice. Further,
\begin{equation}
\label{eq:DKsys}
    \Delta K_{\text{sys}} = \Delta K_I - \Delta K_S.
\end{equation}
We treat the ice as having a velocity post-collision equal to $V_{\text{eq}}$ -- a reasonable approximation if $m_S/m_I$ is large. Therefore, $\Delta K_I=0.5 m_IV_{\text{eq}}^2=0.5m_I(v_S\cos(\theta))^2$ and from \eqref{eq:DeltaKsys}, \eqref{eq:DKsys}:
\begin{equation}
\label{eq:KE1}
    \begin{split}
        |\Delta K_S| = \frac{m_I^2}{2(m_S + m_I)}\big(v_S\cos(\theta)\big)^2.
    \end{split}
\end{equation}
From Figure \ref{fig:collision}(a), observe that $\sin(\theta)=d/r$ where $d$ is the lateral distance between the collision point and the center of the ice and $r$ is the radius of the ice. Therefore from \eqref{eq:KE1}, we may write $|\Delta K_S|$ as a function of $d,r, m_I, v_S$:
\begin{equation}
\begin{split}
       & \Delta K_S(d,r, m_I, v_S) = \frac{1}{2} \frac{m_{I}^2}{m_{S} + m_{I}}\left [v_S \cos\left(\arcsin\left(\frac{d}{r}\right)\right) \right ]^2\\
        & = \frac{v_S^2m_I^2}{2(m_S+m_I)}\left(\frac{r^2-d^2}{r^2}\right), d \in [0, r].
        \label{eq:KE2}
        \raisetag{1.5\normalbaselineskip}
\end{split}
\end{equation}
Observe that the function $\Delta K_S$ is large if collisions are head-on ($\theta=d = 0$) or if $m_I/m_S$ is large. Thus $\Delta K_S$ has the benefit of effectively penalizing collisions with ice that are head-on and or involve large ice relative to the ship. Using $\Delta K_S$, we describe the cost function $u$ from \eqref{eq:cost_fn}.

\begin{figure}[!t]
    \centering
    \includegraphics[scale=0.25]{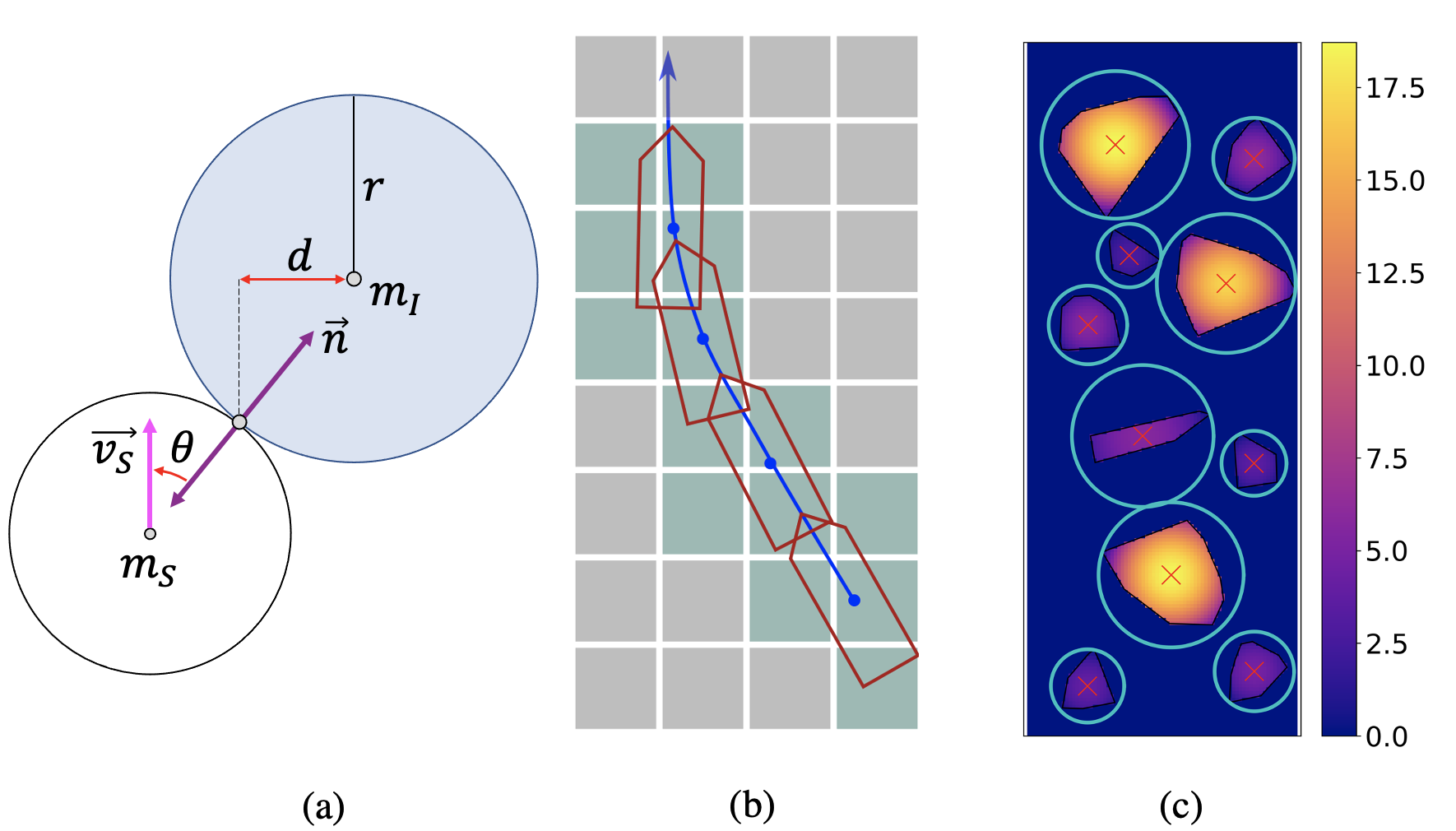}
    \caption{(a) Collision scenario between two disk-shaped bodies. (b) Example path (blue) with ship footprint (red) at regular intervals of arc length with exaggerated costmap resolution to illustrate the swath (green). (c) Sample costmap where the color bar indicates the cost. Obstacle centroids are in red and bounding circles are in light blue.}
    \label{fig:collision}
\end{figure}

To efficiently compute collision costs across an ice channel $\mathcal{C}$, we use a costmap representation of the planar environment as done in~\cite{pivtoraiko2009differentially}. In particular, the channel $\mathcal{C}\subseteq\mathbb{R}^2$ is discretized into a square grid where each grid cell is assigned an identifying tuple $k\in\mathbb{N}^2$ corresponding to the cells' center and a cost $c_{obs}(k, \obs, v_S)$ given obstacles $\obs$ (determined in Line 2 of Algorithm \ref{alg:GeneralPlanner}), and $v_S$ (Line 3). Each cell is occupied by at most one obstacle. The set of costmap cells that are occupied from the area of a ship as it traverses a path $\pi$ is called the \emph{swath} of $\pi$ and the area occupied by the ship is called the \emph{footprint}~\cite{pivtoraiko2009differentially} (see Figure \ref{fig:collision}(b)).

For our planning task, the costmap is effectively a lookup table for computing the collision cost associated with a particular path between a pair of lattice vertices, given an appropriate mapping from the configuration space to the costmap. For each polygonal obstacle $O_j\in\obs$, let $C_j$ denote the position of the centroid of $O_j$, $r_j$ the radius of its bounding circle centered at $C_j$ (Figure \ref{fig:collision}(c)), and $m_j$ its mass. The function $c_{obs}(k, \obs, v_S)$ for a cell $k$ and current ship speed $v_S$ uses the ship kinetic energy loss from (\ref{eq:KE2}):
\begin{equation}
\label{eq:cdef}
   c_{\text{obs}}(k, \obs, v_S) = \begin{cases}
    \Delta K_S(q, r_j, m_j, v_S), \ & k\cap O_j\neq \emptyset\\
    0  \ &\text{otherwise},
    \end{cases}
    \raisetag{3\normalbaselineskip}
\end{equation}
where $q=||k-C_j||$, the Euclidean distance from the cell to the centroid of the obstacle. This function is well defined since each cell can be occupied by at most one obstacle. A sample costmap is illustrated in Figure \ref{fig:collision}(c). 

Finally, with $c_{\text{obs}}(k,\obs, v_S)$ given in \eqref{eq:cdef} we define the cost $u(\pi, v_S)$ for a candidate path $\pi$ (and ships current speed $v_S$) with final arc-length $s_f$ using the discretization of \eqref{eq:cost_fn}:
\begin{equation}
    \label{eq:discrete_cost}
    u(\pi, v_S) = s_f + \alpha\sum_{k\in \text{swath of } \pi} c_{\text{obs}}(k,\obs, v_S).
\end{equation}

\subsection{Heuristic}
\label{sec:heuristic}
Given the current ship pose $P$ (Line 3), we present an admissible heuristic $h$ with a closed form to improve the runtime of the path planning step in Line 6.

At a high-level, we compute the shortest path from $P$ to an infinite line $\mathcal{G}_{\infty}$ that is colinear with the intermediate goal line $\mathcal{G}^i$ (Line 4), subject to the unicycle model described in Section \ref{sec:Model}. This heuristic is admissible to our graph search implementation since the cost function $u$ is lower bounded by path length and $\G^i\subset \G_{\infty}$ (thus the shortest path to $\G_{\infty}$ is no longer than the shortest path to $\G^i$). To characterize the proposed heuristic, we offer the following result:

\begin{theorem}[Closed-form Heuristic]
The shortest path from $P = (P_x, P_y, P_\theta)$ to the infinite line $\G_{\infty}$ with minimum turning radius $r_{\min}$ is a Dubin's path. Referencing Figure~\ref{fig:heuristic_ab}, 
\begin{enumerate}
    \item if $\G_{\infty}$ lies above the point $o$, the path is of the form CS (circular arc C of radius $r_{\min}$, followed by a straight line S) and S intersects $\G_{\infty}$ at a right angle (Fig.~\ref{fig:heuristic_ab}(a));
    \item otherwise, the path is of the form C (Fig.~\ref{fig:heuristic_ab}(b)).
\end{enumerate}
Given these two cases, $h(P)$ is the path length and is given analytically by:
\begin{equation*}
    \begin{split}
    h(P) &=\begin{cases}
    h_1(P), \ &\text{if } o_y \leq \G_y \ \text{(case 1)},\\
    h_2(P), \ &\text{otherwise (case 2)}
    \end{cases},\\
    \text{where} &\\
         h_1(P) &= r_{\min}\min\left(\left|P_{\theta} - \frac{\pi}{2}\right|,\left|P_{\theta} - \frac{5\pi}{2}\right| \right) + \G_y - o_y\\
         h_2(P) &= r_{\min}\left|P_{\theta} - m\arccos\left(\frac{o_y - \G_y}{r_{\min}}\right) - n\right|\\
         o_y &= P_y + m r_{\min} \cos(P_{\theta})\\
         m &= \begin{cases}
        +1, \ &\text{if} \ P_{\theta}\in[0, \frac{\pi}{2}]\cup[ \frac{3\pi}{2}, 2\pi]\\
        -1, \ &\text{otherwise}
        \end{cases}\\
        n &= \begin{cases}
        0, \ &\text{if} \ P_{\theta}\in[0, \frac{\pi}{2}]\\
        \pi, \ &\text{if} \ P_{\theta}\in(\frac{\pi}{2}, \frac{3\pi}{2}]\\
        2\pi, \ &\text{if} \ P_{\theta}\in(\frac{3\pi}{2}, 2\pi]
        \end{cases}\\
    \end{split}
\end{equation*}\\
and where $\G_y$ is the constant $y$-value of the goal $\G_\infty$.
\end{theorem}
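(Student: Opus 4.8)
\emph{Proof proposal.} The plan is to recast the problem as a minimum--arc-length optimal control problem in a single variable. Because $\G_{\infty}$ is horizontal (it has constant $y$-value $\G_y$) and unbounded in the $x$-direction, the $x$-coordinate of the terminal point is unconstrained, so only the evolution of the height $y$ and the heading $\theta$ matters; assume, as in Algorithm~\ref{alg:GeneralPlanner} where $\G^i$ lies ahead of $P$, that $\Delta := \G_y - P_y > 0$. Any feasible path of arc length $\ell$ then satisfies $\dot y = \sin\theta$, $|\dot\theta|\le 1/\rmin$, $\theta(0)=P_\theta$, and first meets $\G_{\infty}$ when the accumulated height $\int_0^s \sin\theta\,ds$ first equals $\Delta$. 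I will show that $h(P)$ is exactly the least such $\ell$ and that it is realised by a C or CS path (hence a Dubin's path).

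The core of the argument is a comparison (``bang--bang'') trajectory. Let $\bar\theta(s)$ denote the heading obtained by turning at the maximal rate $1/\rmin$ along the \emph{shorter} arc toward $\theta=\pi/2$, and then holding $\theta=\pi/2$. Geometrically, $\bar\theta$ corresponds to a circular arc C of radius $\rmin$ on the turning circle whose centre has the larger $y$-coordinate --- this is precisely the sign selection $m$ and the value $o_y = P_y + m\rmin\cos P_\theta$ --- followed, once the heading is vertical, by a straight segment. The key lemma is that for every $s$, $\bar\theta(s)$ is the point of the reachable heading interval $[P_\theta - s/\rmin,\ P_\theta + s/\rmin]$ that maximises $\sin(\cdot)$; equivalently $\sin\bar\theta(s)\ge \sin\theta(s)$ for every feasible $\theta(\cdot)$ and every $s$. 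Granting this, for any feasible path $\int_0^s \sin\theta \le \int_0^s \sin\bar\theta$, so the comparison trajectory reaches height $\G_y$ no later than any competitor; hence the arc length at which $\bar\theta$ first attains $\Delta$ is a lower bound on the optimum, while the truncation of the comparison trajectory at that instant is itself feasible, reaches $\G_{\infty}$, and is exactly a C or CS path. This simultaneously proves optimality, the stated path shapes, and (in the CS case) that the straight segment, having vertical heading $\pi/2$, meets the horizontal line $\G_{\infty}$ at a right angle.

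It then remains to evaluate the first-crossing arc length of $\bar\theta$, which splits into the two cases. On the turning circle the ship attains heading $\pi/2$ at the point $o$ (at height $o_y$), having swept the angle $\min(|P_\theta-\tfrac{\pi}{2}|,\ |P_\theta-\tfrac{5\pi}{2}|)$, the two candidates accounting for the $2\pi$ wrap-around. If $o_y\le \G_y$ (case~1) the turn alone does not provide enough height, so the ship reaches $o$ and then proceeds straight up the remaining distance $\G_y-o_y$; adding the arc length of the turn gives $h_1(P)$. If $o_y>\G_y$ (case~2) the arc crosses $\G_{\infty}$ before the heading becomes vertical; intersecting the circle of radius $\rmin$ centred at height $o_y$ with the line $y=\G_y$ and taking the first hit yields a swept angle of $\big|P_\theta - m\arccos(\tfrac{o_y-\G_y}{\rmin}) - n\big|$, where $n$ selects the correct $\arccos$ branch according to the quadrant of $P_\theta$; this is $h_2(P)$. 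Substituting the expressions for $m$, $n$, and $o_y$ completes the proof.

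The step I expect to be the main obstacle is the pointwise inequality $\sin\bar\theta(s)\ge\sin\theta(s)$: it has to be checked across all heading quadrants (using that $\sin$ attains its maximum over an arc of length $<2\pi$ at the point nearest $\pi/2$ modulo $2\pi$, and that turning the short way keeps this nearest point equal to $\bar\theta(s)$), and one must also handle the subtlety that a path initially pointing ``downward'' ($\sin P_\theta<0$) is forced to lose height before it can gain it, so that the first crossing of $\G_{\infty}$ occurs on the ascending portion of the comparison height profile. By contrast, verifying the trigonometric identities behind $m$, $n$, $o_y$ and that the relevant intersection in case~2 is the one picked out by the chosen $\arccos$ branch is routine case analysis.
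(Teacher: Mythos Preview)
Your argument is correct and takes a genuinely different route from the paper. The paper fixes a candidate terminal point $P^g\in\G_\infty$, invokes the known fact (citing Bui et al.\ and Ma--Castanon) that the shortest curvature-bounded path to a point with free final heading is of type CS, and then performs a one-variable calculus minimisation over the angle $\theta$ at which the straight segment meets $\G_\infty$, writing $L(C)+L(S)$ explicitly and observing the minimiser is $\theta=\pi/2$; Case~2 is dispatched in a single sentence. You instead exploit the $x$-translation invariance of the problem to reduce to a scalar optimal control problem in $(y,\theta)$ and prove optimality directly via the pointwise comparison $\sin\bar\theta(s)\ge\sin\theta(s)$, which simultaneously handles both cases and yields the path shape without any external citation. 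The paper's route is shorter on the page because it outsources the structural step, but it leaves Case~2 essentially unargued and still requires checking that the CS result applies when $P^g$ ranges over a line; your route is self-contained and makes the geometric reason for perpendicular incidence transparent ($\sin$ is maximised at $\pi/2$), at the cost of the quadrant-by-quadrant verification of the key inequality that you correctly flag as the main obstacle.
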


\begin{proof}
Without loss of generality, we consider a turn at an angle of $P_\theta \in [\frac{\pi}{2}, \pi]$ as in Figure \ref{fig:heuristic_ab} (a similar analysis can be made for angles in the other three quadrants).

\emph{Case 1:}
Suppose that $\G_\infty$ lies above $o$. This is equivalent to the condition that the $y$-coordinate of $o$, is no more than the $y$-coordinate of $\G_\infty$, i.e., $o_y = P_y+r_{\min}(-\cos(P_{\theta}))\leq \G_y$. Let $P^g = (P^g_x, P^g_y)$ be any point on $\G_{\infty}$ and is outside of the circle of radius $r_{\min}$ centered at $o$ (it is trivial to show $P^g$ cannot be inside this circle for the shortest path from $P$ to $\G_{\infty}$). Since the heading of $P^g$ is not specified, the shortest path from $P$ to $P^g$ is of the form CS~\cite{bui1994shortest,ma2006receding} where $S$ intersects $\G_{\infty}$ at an angle $\theta$. Therefore, determining the shortest path from $P$ to $\G_{\infty}$ reduces to computing
\begin{equation}
    \min_{\theta} \quad L(C)+L(S),
\end{equation} where $L(\cdot)$ denotes length. We observe 
\begin{equation}
\begin{split}
L(C) &= r_{\min}(P_\theta - \theta),\\
L(S) &=  \frac{\mathcal{G}_y - [P_y - r_{\min}(\cos(P_\theta) - \cos(\theta))]}{\sin(\theta)}.
\end{split}
\end{equation}
Further, we observe that the total path length $L(C)+L(S)$ is minimized by $\theta=\pi/2$. Replacing this value in $h_1(P)=L(C)+L(S)$ yields the result of the Theorem for the case $o_y\leq \G_y$.

\emph{Case 2:} Suppose instead that $\G_\infty$ lies below $o$. In this case, the shortest path from $P$ to $\G_\infty$ consists only of a circular arc $C$ with minimal length equal to the result of the Theorem for the case that $o_y > \G_y$.
\end{proof}
 In the next section, we detail the integration of our proposed planner with the experimental platform.

\begin{figure}[!t]
    \centering
    \includegraphics[scale=0.2]{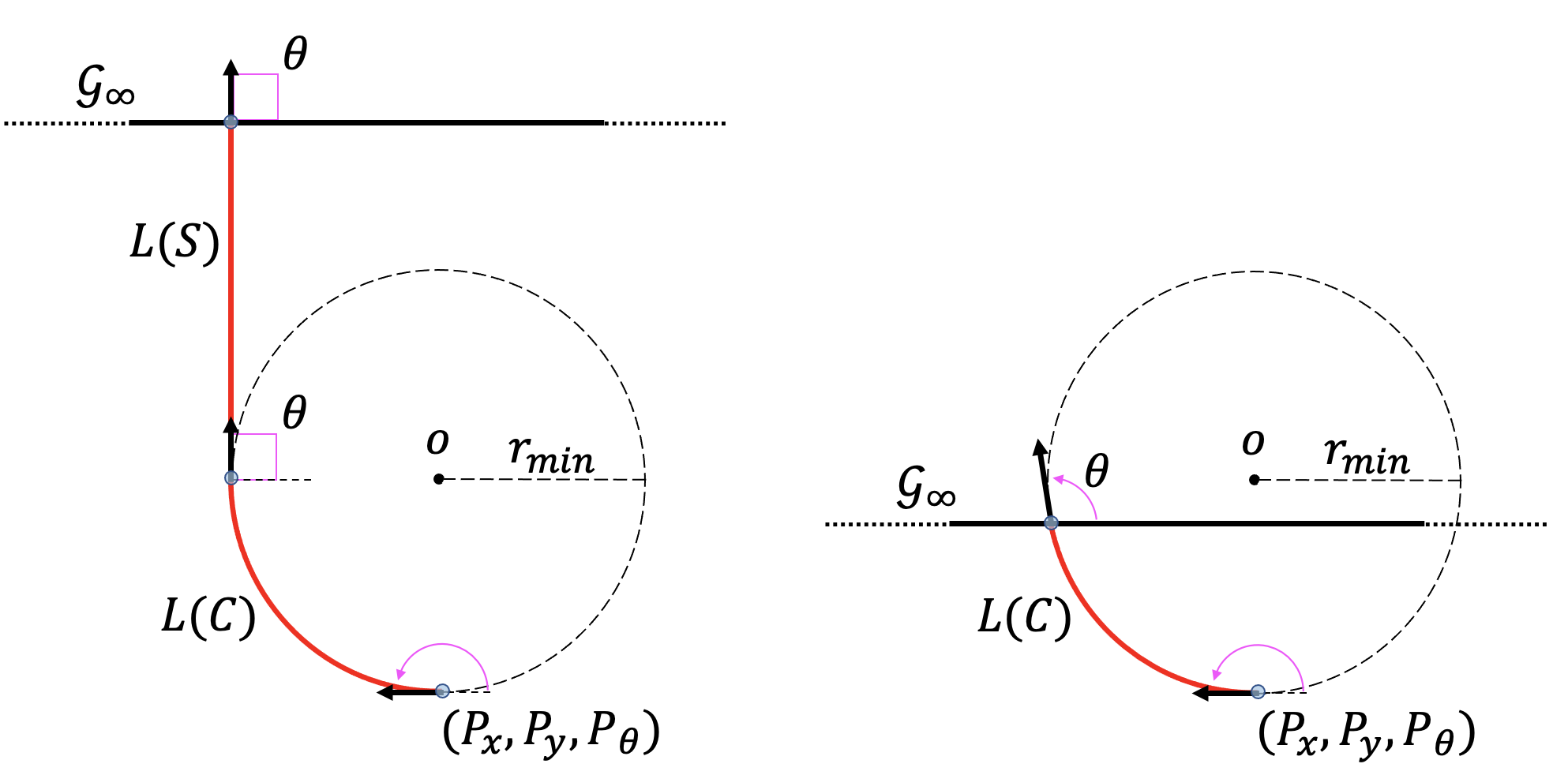}
    \caption{Diagram depicting the geometry of the Dubins path to an infinite line $\G_{\infty}$ for two possible cases.}
    \label{fig:heuristic_ab}
\end{figure}

\section{Details on Experimental Platform}
We validated our proposed approach by integrating our navigation framework with the NRC ice tank facility shown in Figure \ref{fig:icetank}, which is complete with a model vessel and real-time overhead vision system \cite{gash2020machine}.

\emph{Environment and Physical Model:} The NRC ice basin is $12\ \text{m}  \times  76\ \text{m}$ with ice thickness up to 200 mm. We used the same 1:45 scale platform supply vessel (PSV) model deployed in \cite{murrant2021dynamic} shown in Figure \ref{fig:icetank}. 

\emph{Vision System:} The facility contains 20 ceiling cameras sending ice information at a frequency of 1 Hz. The ASV configuration is computed at 50 Hz using the tracking system and an on-board inertial measurement unit (IMU) \cite{gash2020machine, murrant2021dynamic}. Given the two update rates, we set $\Delta t = 1$ in Algorithm \ref{alg:GeneralPlanner}.

\emph{Controller:}
\label{control section}
We employed a Dynamic Positioning (DP) controller -- widely used in marine navigation~\cite{ahani2020optimum, mehrzadi2020review} -- which generates thruster/propeller commands to regulate position and heading. 
We used a constant nominal velocity of $0.3$ m/s and a minimum turning radius $r_{\min}=2$ m, computed according to a full-scale vessel.

\emph{Planner Parameters:} Discretization was set to $1 \times 1$ m for planar position and $8$ equally spaced values for heading. In Figure \ref{fig:prims} we show the two sets of different motion primitives generated for the axis-aligned and non-axis aligned directions which consist of $15$ and $19$ primitives respectively.

Costmap grid resolution was $0.25$ m, finite horizon distance $\Delta h = 20$ m in Algorithm \ref{alg:GeneralPlanner}, and the cost function tuning parameter $\alpha = 10$ in \eqref{eq:cost_fn}.

\begin{figure}[!t]
    \centering
    \includegraphics[width=0.85\linewidth]{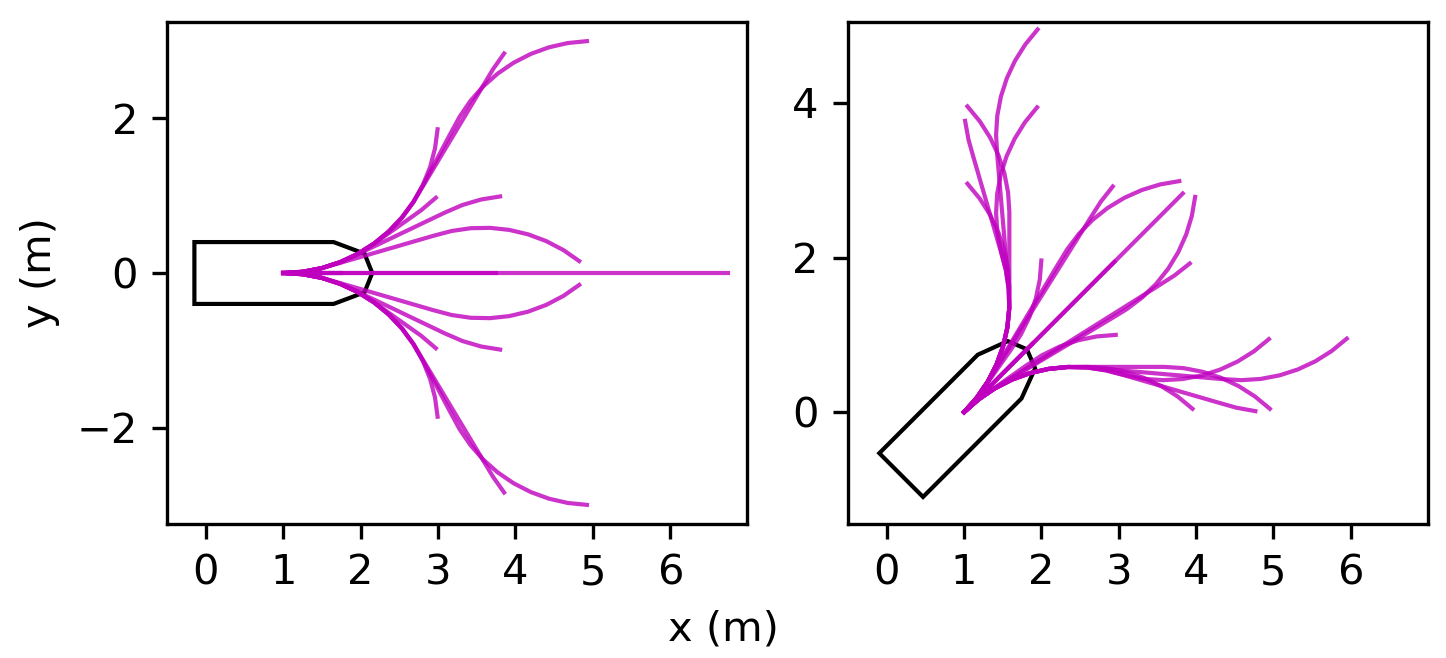}
    \caption{Axis aligned (left) and non-axis aligned (right) motion primitives generated for a 1:45 scale ASV model.}
    \label{fig:prims}
\end{figure}

\section{Results}
\label{resultssection}
We demonstrate the efficacy of the proposed approach through simulation and physical experiments. Each trial consisted of navigating a ship across the length of an ice channel to the specified goal $\G$. Our simulations were done in Python and the 2D physics library Pymunk \cite{pymunk} was used as our backbone for the simulation experiments. 

\subsection{Simulation Setup}
The simulation setup matches that of the experimental platform (e.g., NRC ice tank dimensions, controller, 1:45 vessel model). Given a specified target ice concentration (expressed as a value in [0, 1] where 1 is maximal ice cover), we populated the map with randomly generated non-overlapping polygons subject to the following constraints: $R_{\min} = 0.5\text{m}, \ R_{\max} = 2\text{m}, \ y_{\min} = 5\text{m}, \ y_{\max} = 70\text{m},$ where $R_{\min}, R_{\max}$ are constraints on the radii of the primal circles from which the polygons stem from while the $y$ constraints enforce the polygon vertices to be within a specified region of the environment. We randomized the ship starting $x$ position, and fixed the ship starting $y$ position and heading to $2$ m and $\pi / 2$ respectively. The goal line segment $\G$ was set to $72$ m in the $y$-axis. 


\subsection{Simulation Results}
In simulation, we explored our framework's effectiveness as a function of ice concentration via comparisons to Algorithm \ref{alg:GeneralPlanner} using other planning schemes (Line 6). Two baseline planning algorithms were considered which we refer to as \emph{straight} and \emph{skeleton}. The former is simply a planner that returns a constant straight path from the ship's current position to the goal $\G$ and the latter refers to the shortest open-water path routing approach described in \cite{gash2020machine} and \cite{murrant2021dynamic}. Their approach constructs morphological skeletons based on the ice environment to generate paths. We refer to the three different versions of our navigation framework based on their planning scheme, i.e. \emph{lattice}, \emph{straight}, and \emph{skeleton}. We performed the same set of trials across all three of these navigators. In total we ran $3$ navigators $\times$ $50$ trials $\times$ $4$ ice concentrations $= 800$ experiments. The ice concentrations considered were $0.2, \ 0.3, \ 0.4,$ and $0.5$. Note, we used a 1st order Nomoto model \cite{fossen2011handbook} to describe our vessel dynamics in simulation.

\subsubsection{Metrics}
We computed a running total of the kinetic energy lost by the ship $\Delta K_S$ due to collisions with ice, using the physics simulator. To better interpret this metric, we used the ship total kinetic energy loss in the straight navigator to normalize the values for each trial for lattice and skeleton. Further, we captured the average tracking error across all simulations of the lattice and skeleton navigators to gauge how easy each reference path was to track. Finally, we logged the mass of the ice collided with for each collision that occurred in each simulation and obtained a probability density function for each navigator. This effectively approximates the probability of colliding with an ice floe of any size for each navigator.  

\subsubsection{Results}
We present two figures that capture the main results from our simulations. Figure \ref{fig:violin} illustrates the improved performance of our approach (lattice) with regards to the total kinetic energy lost by the ship from collisions with ice. Our mean and median are consistently lower across all 4 ice concentrations. For the trials done in the most dense ice fields (i.e. ice concentration $= 0.5$) we achieved a mean of $0.39$ vs. $0.43$ for skeleton. In other words, our approach achieved a $9\%$ decrease in terms of kinetic energy lost. In addition, our maximum (outliers not shown in plot) is significantly lower both in the $0.2$ and the $0.5$ ice concentrations, e.g., for $0.5$ concentration our max was $1.0$ and skeleton was $1.6$.

In Figure \ref{fig:pdf}, we show the probability density functions for collisions across different navigators 
\cite{daley2014gpu}. In the 0.4 ice concentration scenario, the mean mass of ice colliding with the ship was $2.0$ for the proposed navigator, $2.6$ for the skeleton, and $2.8$ for straight while in the $0.5$ ice concentration scenario, $1.4$ for the proposed, $1.9$ for the skeleton, and $2$ for the straight.  Therefore, on average the proposed approach collided with ice that was $24\%$ smaller in mass than the skeleton navigator, and $29\%$ smaller than the straight.

Further, the tracking error for paths using the proposed approach was, on average $50\%$ lower than that of the skeleton navigator. Note, path planning with our approach took an average of $90$ ms with a max of $127$ ms which are both comparable to the skeleton navigator. Finally, graph search using our heuristic from \ref{sec:heuristic} expanded an average of $15\%$ (max of $48\%$) fewer nodes than a euclidean distance baseline.

\begin{figure}[t]
    \centering
    \includegraphics[scale=0.48]{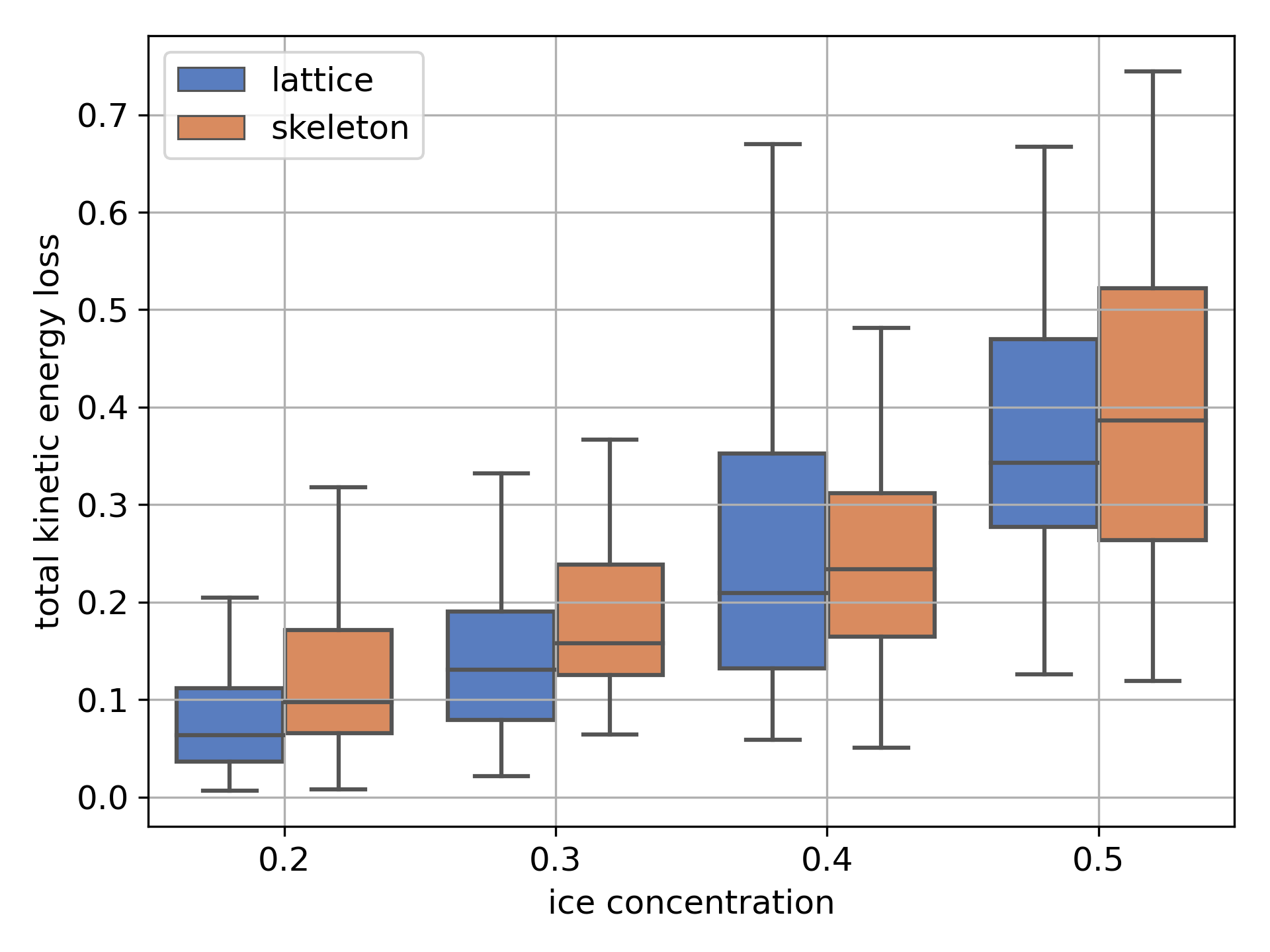}
    \caption{Total kinetic energy loss by ship (normalized by straight navigator) as a function of the ice concentration for our navigation framework using lattice planning (ours) and skeleton.}
    \label{fig:violin}
\end{figure}

\begin{figure}[t]
    \centering
    \includegraphics[width=\linewidth]{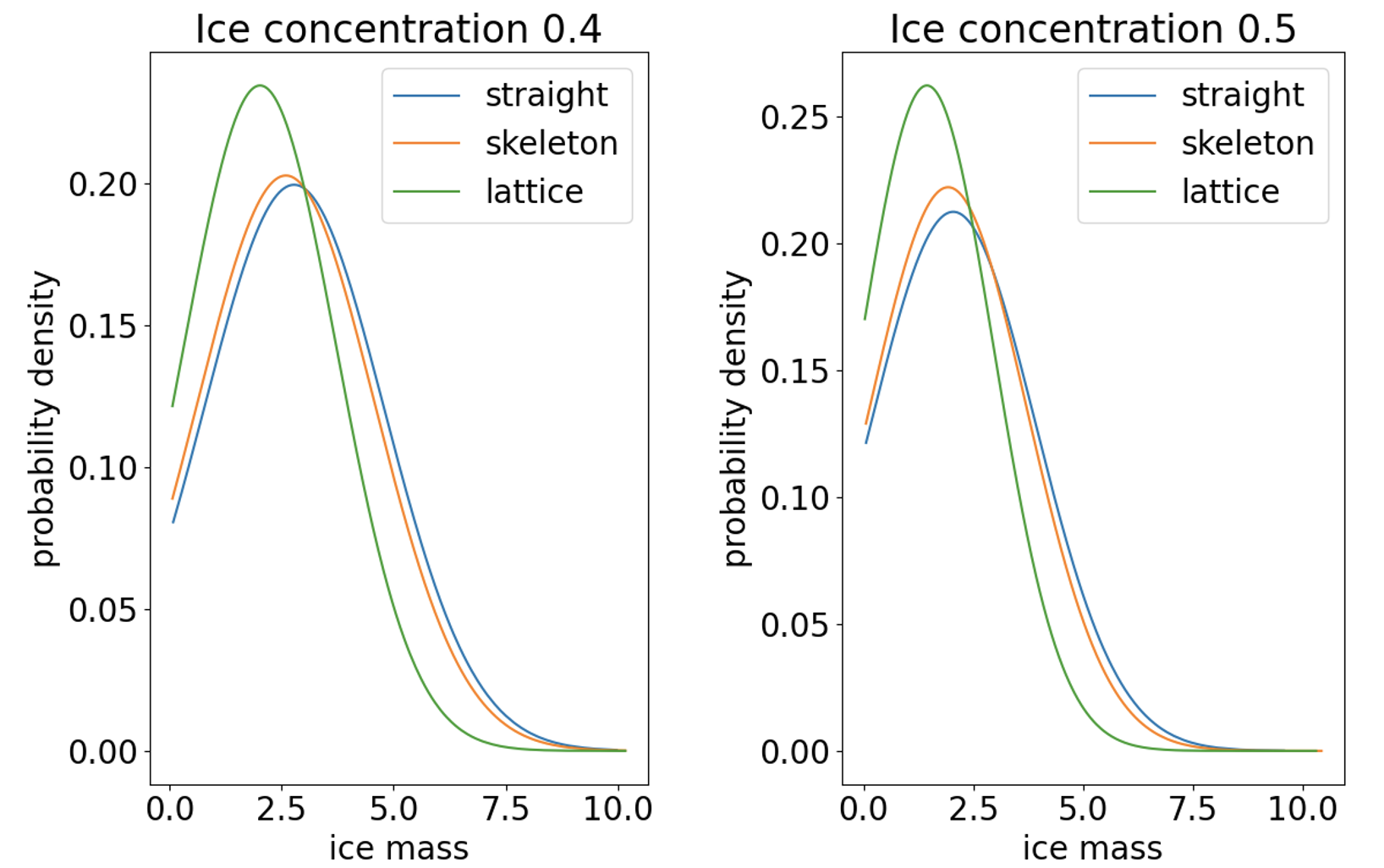}
    \caption{Frequency of ice collision by mass for the three planning methods, smoothed using kernel density estimation. Note, ice mass is dimensionless here.}
    \label{fig:pdf}
\end{figure}

\begin{figure}[!h]
    \centering
    \includegraphics[width=0.55\linewidth]{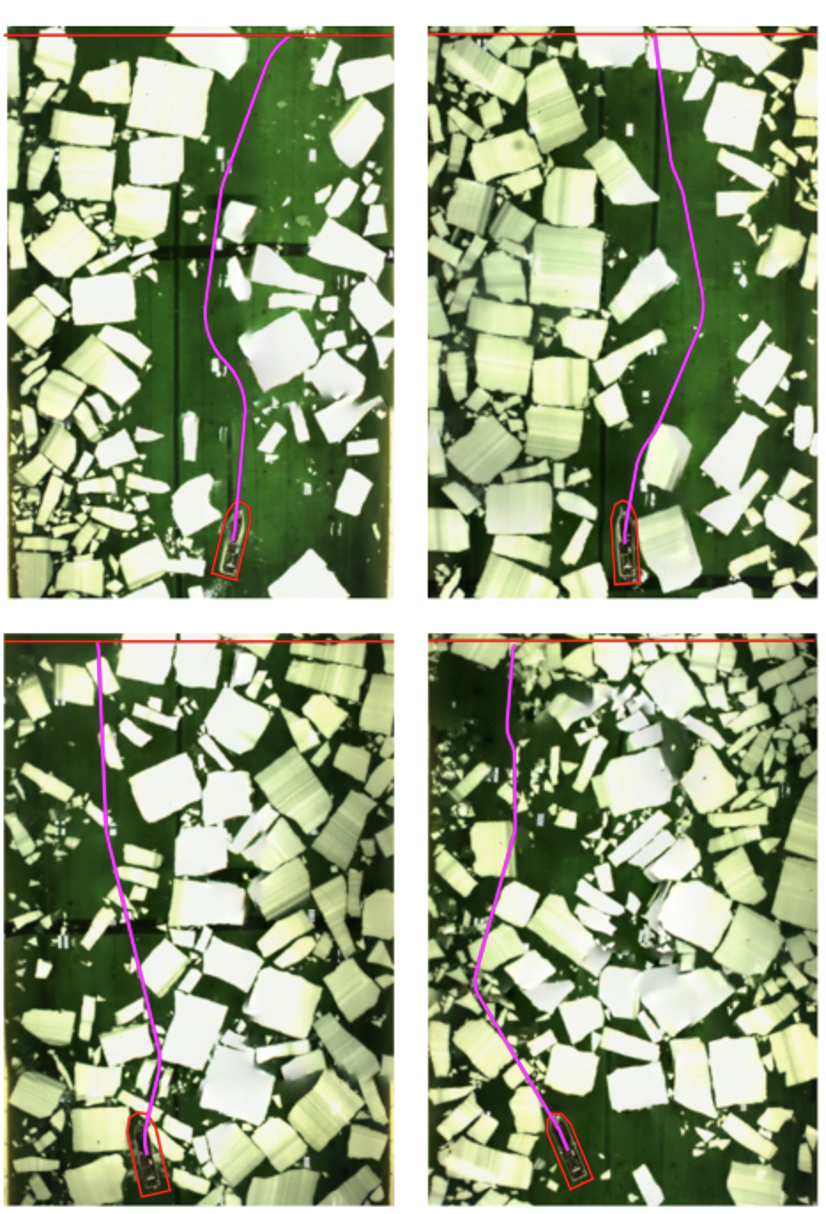}
    \caption{Snapshots of four different trials done in the $12\ \text{m}  \times  76\ \text{m}$ ice tank at varying levels of ice concentration.}
    \label{fig:allships}
\end{figure}

\subsection{Real World Results}
We conclude with an overview of the experiments ran with our planner in the physical NRC ice tank. We ran a total of 8 trials across two different ice concentrations (medium and high). This series of tests proved to be the first successful attempts at fully autonomous navigation across the entire length of the ice basin in the NRC ice tank facility following partial successes in preliminary testing done in \cite{murrant2021dynamic}.

We show four representative snapshots (Figure \ref{fig:allships}) taken during the experiments and make a series of observations that highlight our planner features. The advantage of having a non-distinct goal somewhere along a line segment is clearly shown across each snapshot. We also see the smoothness of the turn-constrained planned paths that result in better tracking behavior than any-angle approaches such as the skeleton planner. Most importantly, the collisions that occur along the paths are consistent with our design decisions captured by our proposed cost function (\ref{eq:discrete_cost}). Namely, avoiding larger ice floes over smaller ones, and colliding with ice such that the ship loses minimal kinetic energy. These last two points are most apparent in the 2 snapshots taken from the high ice concentration experiments (bottom left and bottom right in Figure \ref{fig:allships}).

\section{Conclusions}
In this work, we proposed an autonomous real-time navigation framework for ASVs through ice-covered waters. Our method tailored well-known lattice-based path planning and receding horizon-based planning to produce an effective navigation strategy for this environment. A key component of our framework is the proposed cost function, which captures the energy lost by the ship during a collision and heavily penalizes head-on collisions with larger ice floes. Our planner achieved better overall performance than existing planning solutions designed for navigation in icy waters. In future work, we intend to incorporate a predictive ice-motion model into our planner.









\bibliographystyle{IEEEtran}
\bibliography{bib.bib}

\end{document}